\begin{document}

\mainmatter  %
\title{Morse sequences
}

\titlerunning{Morse sequences}

\author{Gilles Bertrand\orcidID{0009-0004-7294-7081}}

\authorrunning{G. Bertrand}

\institute{Univ Gustave Eiffel, CNRS, LIGM, F-77454 Marne-la-Vall\'{e}e, France
\email{gilles.bertrand@esiee.fr}}

\maketitle

\newcommand{\bbbu}{\; \ddot{\cup} \;}
\newcommand{\axr}[1]{\ddot{\textsc{#1}}  \normalsize}

\newcommand{\axcup}{\textsc{C\tiny{UP}} }
\newcommand{\axcap}{\textsc{C\tiny{AP}} }
\newcommand{\axunion}{\textsc{U\tiny{NION}} }
\newcommand{\axinter}{\textsc{I\tiny{NTER}} }

\newcommand{\bb}[1]{\mathbb{#1}}
\newcommand{\ca}[1]{\mathcal{#1}}
\newcommand{\ax}[1]{\textsc{#1} \normalsize}

\newcommand{\axb}[2]{\ddot{\textsc{#1}} \textsc{\tiny{#2}}  \normalsize}
\newcommand{\bbb}[1]{\ddot{\mathbb{#1}}}
\newcommand{\cab}[1]{\ddot{\mathcal{#1}}}

\newcommand{\rel}[1]{\scriptstyle{\mathbf{#1}}}

\newcommand{\rela}[1]{\textsc{\scriptsize{\bf{{#1}}}} \normalsize}

\newcommand{\de}[2]{#1[#2]}
\newcommand{\di}[2]{#1\langle #2 \rangle}

\newcommand{\la}{\langle}
\newcommand{\ra}{\rangle}
\newcommand{\hs}{\hspace*{\fill}}

\newcommand{\cell}{\mathbb{C}}
\newcommand{\cellp}{\mathbb{C}^\times}
\newcommand{\simp}{\mathbb{S}}
\newcommand{\comp}{\mathbb{H}}
\newcommand{\simpp}{\mathbb{S}^\times}
\newcommand{\den}{\mathbb{D}\mathrm{en}}
\newcommand{\ram}{\mathbb{R}\mathrm{am}}
\newcommand{\tree}{\mathbb{T}\mathrm{ree}}
\newcommand{\graph}{\mathbb{G}\mathrm{raph}}
\newcommand{\vertex}{\mathbb{V}\mathrm{ert}}
\newcommand{\edge}{\mathbb{E}\mathrm{dge}}
\newcommand{\equ}{\mathbb{E}\mathrm{qu}}
\newcommand{\esub}{\mathbb{E}\mathrm{sub}}

\newcommand{\topp}{\langle \mathrm{K} \rangle}
\newcommand{\topq}{\langle \mathrm{Q} \rangle}
\newcommand{\topxp}{\langle \mathbb{X}, \mathrm{P} \rangle}
\newcommand{\topxq}{\langle \mathbb{X},\mathrm{Q} \rangle}

\newcommand{\vt}{\mathcal{K}}
\newcommand{\vtp}{\mathcal{T'}}
\newcommand{\vtpp}{\mathcal{T''}}
\newcommand{\vq}{\mathcal{Q}}
\newcommand{\vqp}{\mathcal{Q'}}
\newcommand{\vqpp}{\mathcal{Q''}}
\newcommand{\vk}{\mathcal{K}}
\newcommand{\vkp}{\mathcal{K'}}
\newcommand{\vkpp}{\mathcal{K''}}

\newcommand{\C}{\ensuremath{\searrow^{\!\!\!\!\!C}}}
\newcommand{\Detach}{\ensuremath{\;\oslash\;}}

\newcommand{\sig}{\sigma}
\newcommand{\del}{W}
\newcommand{\ms}{\overrightarrow{W}}
\newcommand{\msi}{\overrightarrow{W_{i}}}
\newcommand{\msim}{\overrightarrow{W_{i-1}}}
\newcommand{\mss}{\widehat{W}}
\newcommand{\msc}{\ddot{W}}
\newcommand{\mso}{\overleftarrow{W}}

\begin{abstract}
We introduce the notion of a Morse sequence, which provides a simple and effective approach to discrete Morse theory.
A Morse sequence is a sequence composed solely of two elementary operations, that is, 
expansions (the inverse of a collapse), and fillings (the inverse of a perforation).
We show that a Morse sequence may be
seen as an alternative way to represent the gradient vector field of an arbitrary
discrete Morse function. We also show that it is possible, in a straightforward
manner, to make a link between Morse sequences and different kinds of Morse
functions. At last, we introduce maximal Morse sequences, which formalize 
two basic schemes for building a Morse sequence from
an arbitrary simplicial complex.

\keywords{Discrete Morse theory \and Expansions and collapses \and
Fillings and perforations \and Simplicial complex.}
\end{abstract}

\section{Introduction}

Discrete Morse theory, developed by Robin Forman  \cite{For98,For02}, studies the topology of objects 
using functions that assign values to their cells of different dimensions.
A discrete Morse function detects some special cells, 
called critical cells, which capture the essential topological features of the object. 

In this paper, we present an approach where, instead of a Morse function, a sequence of elementary operators 
is used for a simple representation of an object. This sequence, that we called \emph{a Morse sequence},
is composed solely of two elementary operations, that is,
expansions (the inverse of a collapse), and fillings (the inverse of a perforation).
These operations correspond exactly to the ones introduced by Henry Whitehead \cite{Whi39}.
After some basic definitions and two meaningful examples (Sections \ref{sec:basic}, \ref{sec:sequences}, and \ref{sec:optimal}), we show that a Morse sequence is
an alternative way to represent the gradient vector field of an arbitrary
discrete Morse function (Section \ref{sec:gradient}). We also show that it is possible to recover immediately, from a Morse sequence,
different kinds of Morse
functions (Section \ref{sec:functions}). At last, we introduce maximal Morse sequences, which formalize
two basic schemes for building a Morse sequence from
an arbitrary simplicial complex (Section \ref{sec:maximal}).

\newpage

\section{Basic definitions}
\label{sec:basic}

Let $K$ be a finite family composed
of non-empty finite sets.
The family $K$ is a {\it (simplicial) complex} if $\sigma \in K$ whenever $\sigma \not= \emptyset$ and $\sigma \subseteq \tau$
for some $\tau \in K$.

An element of a simplicial complex $K$ is {\it a simplex of $K$} or {\it a face of $K$}.
A {\em facet of $K$} is a simplex of $K$ that is maximal for inclusion.
The {\it dimension} of $\sigma \in K$, written $dim(\sigma)$,
is the number of its elements
minus one. If $dim(\sigma) =p$, we say that $\sigma$ is a \emph{$p$-simplex}.
The {\it dimension of $K$}, written $dim(K)$,
is the largest dimension of its simplices,
the {\it dimension of $\emptyset$}, the void complex,  being defined to be $-1$.
We denote by $K^{(p)}$ the set composed of all $p$-simplexes of $K$.

If $\sigma \in K^{(p)}$ we set
$\partial(\sigma) = \{\tau \in K^{(p-1)} \; | \; \tau \subset \sigma \}$,
which is the {\em boundary of $\sigma$}.

We recall some basic definitions related to the collapse
operator \cite{Whi39}. \\
Let $K$ be a complex and let $\sig,\tau$ be two distinct faces of~$K$.
The couple $(\sig,\tau)$ is a {\em free pair for $K$}
if $\tau$ is the only face of $K$ that contains
$\sig$.
Thus, the face $\tau$ is necessarily a facet of $K$.
If $(\sig,\tau)$ is a free pair for $K$, then $L = K \setminus \{\sig,\tau \}$
is {\em an elementary collapse of $K$}, and $K$ is {\em an elementary expansion of $L$}.
We say that
$K$ {\em collapses onto $L$},
or that $L$ {\em expands onto $K$},
if there exists a sequence
$\langle K = M_0,...,M_k = L \rangle$ such that
$M_i$ is an elementary collapse of $M_{i-1}$, $i \in [1,k]$. The complex
$K$ is {\em collapsible} if $K$ collapses onto a \emph{vertex}, that is, onto a complex of the form
$\{ \{ a \} \}$.
We say that $K$ is {\em (simply) homotopic to $L$},
or that $K$ and $L$ are {\em (simply) homotopic},
if there
exists a sequence  $\langle K = M_0,...,M_k=L \rangle$ such that
$M_i$ is an elementary collapse or an elementary expansion of $M_{i-1}$,
$i \in [1,k]$.
The complex $K$ is {\em (simply) contractible} if $K$ is simply homotopic
to a vertex.

\section{Morse sequences}
\label{sec:sequences}

Let us  start first with the definition of perforations and fillings.

Let $K,L$ be simplicial complexes.
If $\sigma \in K$ is a facet of $K$ and if $L = K \setminus \{\sigma \}$, we say that
$L$ is {\em an elementary perforation of $K$}, and that
$K$ is {\em an elementary filling of $L$}.

These transformations were introduced by Whitehead in a seminal paper \cite{Whi39}. Combined with collapses and expansions, it has been shown that we obtain four operators
that correspond to the homotopy equivalence between two simplicial complexes  (Th. 17 of \cite{Whi39}).
See also \cite{Ber21} which provides another kind of equivalence based on a variant of these operators.

In this paper, we introduce the notion of a ``Morse sequence'' by simply considering expansions and fillings
of a simplicial complex.

\begin{definition} \label{def:seq1}
Let $K$ be a simplicial complex. A \emph{Morse sequence (on $K$)} is a sequence
$\ms = \langle \emptyset = K_0,...,K_k =K \rangle$ of simplicial complexes such that,
for each $i \in [1,k]$, $K_i$ is either an elementary expansion or an elementary filling of $K_{i-1}$.
\end{definition}

Let $\ms = \langle K_0,...,K_k \rangle$ be a Morse sequence.
For each $i \in [1,k]$: \\
- If $K_i$ is an elementary filling of $K_{i-1}$, we write $\hat{\sigma}_i$  for the simplex $\sigma$ such that $K_i = K_{i-1} \cup \{\sigma\}$, we say that
the face $\sigma$ is \emph{critical for $\ms$}. \\
- If $K_i$ is an elementary expansion of $K_{i-1}$, we write
$\hat{\sigma}_i$ for the free pair $(\sigma,\tau)$ such that $K_i = K_{i-1} \cup \{\sigma,\tau \}$, we say that
$\hat{\sigma}_i$, $\sigma$, $\tau$, are \emph{regular for $\ms$}. \\
We write $\mss = \langle \hat{\sigma}_1,..., \hat{\sigma}_k\rangle$, and we say that
$\mss$ is a \emph{(simplex-wise) Morse sequence}.
Clearly, $\ms$ and $\mss$ are two equivalent forms.
We shall pass from one of these forms to the other without
notice.

\begin{figure*}[tb]
    \centering
    \begin{subfigure}[t]{0.31\textwidth}
        \centering
        \includegraphics[height=.99\textwidth]{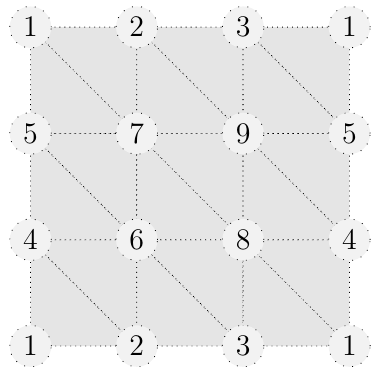}
        \caption{}
    \end{subfigure}%
    ~
    \begin{subfigure}[t]{0.31\textwidth}
        \centering
        \includegraphics[height=.99\textwidth]{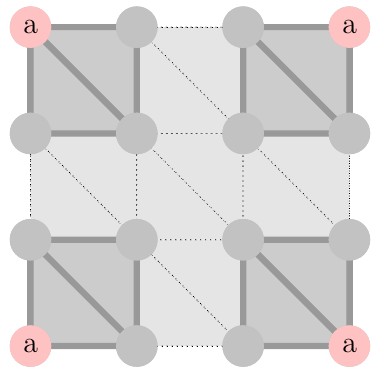}
        \caption{}
    \end{subfigure}%
    ~
    \begin{subfigure}[t]{0.31\textwidth}
        \centering
        \includegraphics[height=.99\textwidth]{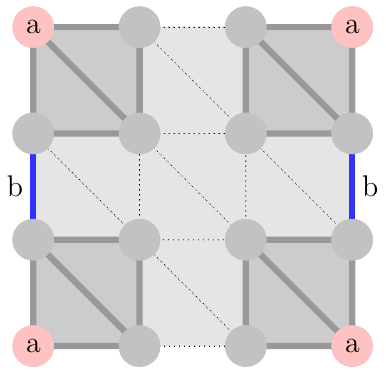}
        \caption{}
    \end{subfigure}%
    
    \begin{subfigure}[t]{0.31\textwidth}
        \centering
        \includegraphics[height=.99\textwidth]{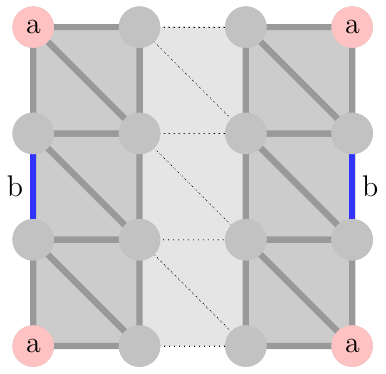}
        \caption{}
    \end{subfigure}%
    ~
    \begin{subfigure}[t]{0.31\textwidth}
        \centering
        \includegraphics[height=.99\textwidth]{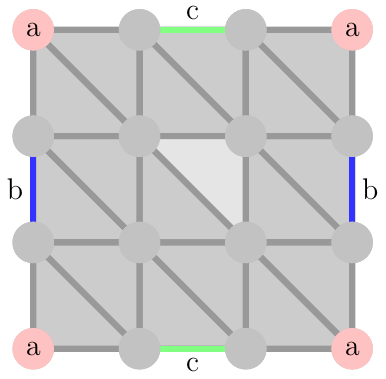}
        \caption{}
    \end{subfigure}
    ~
    \begin{subfigure}[t]{0.31\textwidth}
        \centering
        \includegraphics[height=.99\textwidth]{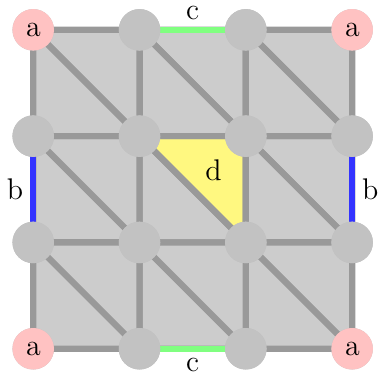}
        \caption{}
    \end{subfigure}

 \caption{A Morse sequence on the torus. (a) A triangulation, points with the same label are identified. (b) The sequence begins with the critical 0-simplex~$a$.  Elementary expansions are added to the sequence
 until we obtain a maximal expansion from $a$. (c)  The critical 1-simplex $b$ is added to the sequence. (d) A maximal expansion from $b$ is done. (e)
 The second critical 1-simplex $c$ is added, and a maximal expansion from $c$ is done. (f) The critical 2-simplex~$d$ is added. }
 \label{fig:MorseSequenceTorus}
\end{figure*}

Observe that, if $\ms = \langle K_0,...,K_k \rangle$ is a Morse sequence, with $k \geq 1$, then
$K_1$ is necessarily a filling of $\emptyset$. 
Thus, $K_1$ is necessarily a vertex. That is, $K_1$ is made of a single $0$-simplex that is critical for $\ms$.

Fig.~\ref{fig:MorseSequenceTorus} presents an example of a Morse sequence $\ms$ on a torus $T$. There are different ways to obtain a Morse sequence. 
In Fig.~\ref{fig:MorseSequenceTorus}, we apply the following strategy.
We build $\ms$ from the left to the right. Starting from~$\emptyset$, we obtain $T$ by iterative elementary expansions and fillings.
Also, we make maximal expansions, that is, 
we make a filling only if no elementary expansion can be made. \\

\begin{remark}
Let $\mss = \langle \hat{\sigma}_1,..., \hat{\sigma}_k\rangle$ be a Morse sequence and let $\hat{\sigma}_i$, $\hat{\sigma}_j$, $j >i$, be two consecutive
critical faces of $\mss$, that is, $\hat{\sigma}_{i+1},...,\hat{\sigma}_{j-1}$ are regular pairs.
Then, as a direct consequence of the definition of a Morse sequence, the complex $X_{j-1}$ collapses onto $X_{i}$.
This property is the core of a fundamental theorem, called \emph{the collapse theorem}, which makes the link between the basic definitions of  discrete Morse theory
and discrete homotopy (See Theorem 3.3 of \cite{For98a} and Theorem 4.27 of \cite{Sco19}).
In a certain sense, we can say that Morse sequences provide an introduction to discrete Morse theory by starting from this property.
\end{remark}

\begin{remark}
Any Morse sequence $\ms$ on $K$ is a \emph{filtration on $K$}, that is a sequence of nested complexes  $\langle \emptyset = K_0,...,K_k =K \rangle$
such that, for $i \in [0,k-1]$, we have $K_i \subseteq K_{i+1}$; see \cite{DW22}.
Also any \emph{simplex-wise filtration on $K$} is a special case of a Morse sequence where,
for $i \in [0,k-1]$, $K_{i+1} \setminus K_i$ is made of a single simplex.
That is, a simplex-wise filtration is a Morse sequence which is made solely of fillings; all faces of $K$ are critical for such a sequence.
\end{remark}

\section{Optimal and perfect Morse sequences}
\label{sec:optimal}

In the next two sections (Sections 5 and 6), we will see that a Morse sequence is an alternative way to represent the
gradient vector field of any arbitrary discrete Morse function.
Thus, we may directly transpose, without loss of generality, some notions relative to Morse functions to Morse sequences.
In the following, we give an illustration of such a transposition for the notions of optimal and perfect discrete Morse functions (see Def. 2.87 and Def. 4.6 of \cite{Sco19}).
Also, we give an exemple of a classical result that may be proved directly thanks to the notion of a Morse sequence (Proposition \ref{pro:seq01}).

Let $\ms$ be a Morse sequence on a complex $K$.
We say that $\ms$ is \emph{optimal} if the number $N$ of critical faces
for $\ms$ is minimal. That is, the number of faces that are critical for any other Morse sequence on $K$ is greater or equal
to $N$. \\
If $dim(K) = d$, the \emph{Morse vector of $\ms$} is the vector $\vec{c}(\ms) = (c_0,\dots,c _p, \dots,c_d)$ where
$c_p$ is the number of $p$-simplexes that are critical for $\ms$.
We denote by $\vec{b}(K)$ the vector $\vec{b}(K) = (b_0,\dots,b_p,\dots, b_d)$ where
$b_p$ is the $pth$ Betti number (mod. 2) of $K$
(see \cite{Gib10}). We also use the notations
$c_p(\ms)$ and $b_p(K)$ when $\ms$ and $K$ are not clear from the context.

We say that a Morse sequence $\ms$ on $K$ is \emph{perfect} if $\vec{c}(\ms) = \vec{b}(K)$. 
In other words, a Morse sequence $\ms$ on $K$ is perfect if each number $b_p$ of ``$p$-dimensional holes of $K$'' is equal to 
the number $c_p$ of critical $p$-simplexes of $\ms$.

Suppose a complex $K$ is collapsible. Then we have $\vec{b}(K) = (1,0,\dots,0)$. 
Also, we easily see that $K$ admits a Morse sequence which has a single critical face, this face being a $0$-simplex.
For this sequence, we have $\vec{c}(\ms) = (1,0,\dots,0)$, thus $K$ admits a
perfect discrete Morse sequence.

\begin{figure*}[tb]
    \centering
    \begin{subfigure}[t]{0.32\textwidth}
        \centering
        \includegraphics[width=.99\textwidth]{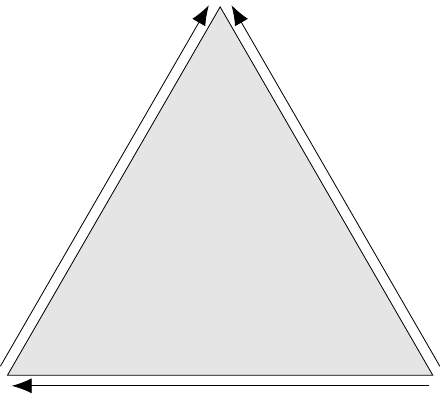}
        \caption{}
    \end{subfigure}%
    ~
    \begin{subfigure}[t]{0.32\textwidth}
        \centering
        \includegraphics[width=.99\textwidth]{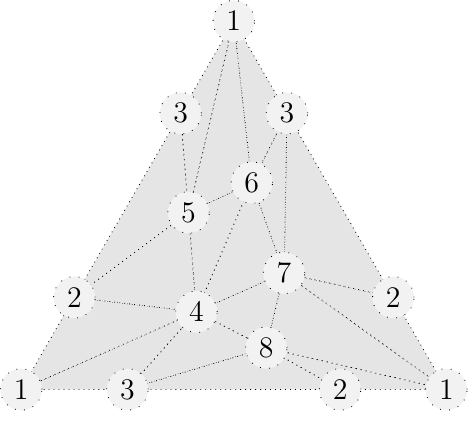}
        \caption{}
    \end{subfigure}%
    ~
    \begin{subfigure}[t]{0.32\textwidth}
        \centering
        \includegraphics[width=.99\textwidth]{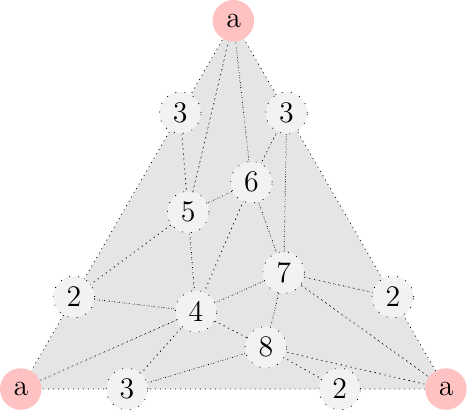}
        \caption{}
    \end{subfigure}%

    \begin{subfigure}[t]{0.32\textwidth}
        \centering
        \includegraphics[width=.99\textwidth]{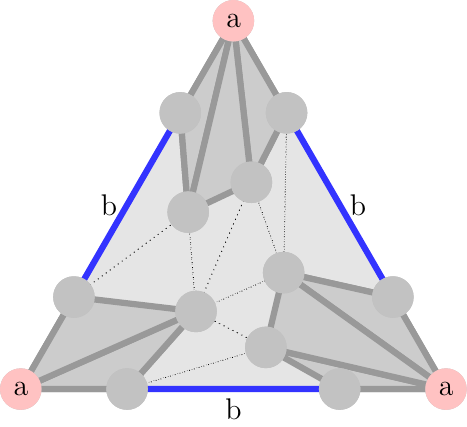}
        \caption{}
    \end{subfigure}%
    ~
    \begin{subfigure}[t]{0.32\textwidth}
        \centering
        \includegraphics[width=.99\textwidth]{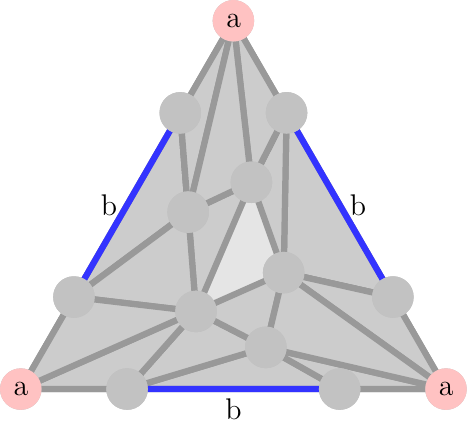}
        \caption{}
    \end{subfigure}
     ~
    \begin{subfigure}[t]{0.32\textwidth}
        \centering
        \includegraphics[width=.99\textwidth]{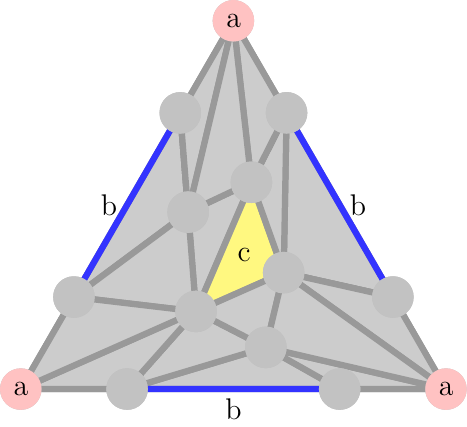}
        \caption{}
    \end{subfigure}

 \caption{A Morse sequence on the dunce hat.  (a) the dunce hat, the three edges of the triangle have to be identified with the arrows. (b) A triangulation of the dunce hat. (c) The sequence begins with the critical 0-simplex $a$. 
 c) A maximal expansion from $a$ is done, then the 1-critical simplex $b$ is added. (e) A maximal expansion from $b$. (f) The critical 2-simplex
 $c$ is added.}
  \label{fig:MorseSequenceDunceHat1}
\end{figure*}

Now, let us consider a complex that is contractible but not collapsible.
The dunce hat \cite{Zee64} is a basic example of such a complex.
In Fig.~\ref{fig:MorseSequenceDunceHat1}, a Morse sequence~$\ms$ for a triangulation $D$ of the dunce hat is given; the same strategy as above has been used.
We see that, in this example, we have $\vec{c}(\ms) = (1,1,1)$. But, by contractibility of $D$, we have
$\vec{b}(D) = (1,0,0)$. This leads to the question: Is it possible to have a perfect Morse sequence for $D$? 

We have the answer to this question by simply reading the definition of a Morse sequence:
If  a sequence $\ms$ on $K$ has a single critical simplex (therefore, a $0$-simplex), then
clearly the complex $K$ is collapsible. 

Thus -- thanks to the notion of a Morse sequence -- we have
a straightforward proof of the following classical result (see Prop. 4.10 of \cite{Sco19}, see also \cite{Aya12} and \cite{Aya12a}).

\begin{proposition} \label{pro:seq01}
   Let $K$ be a complex with $\vec{b}(K) = (1,0,\cdots,0)$. 
   The complex $K$ admits a
perfect discrete Morse sequence if and only if $K$ is collapsible.
\end{proposition}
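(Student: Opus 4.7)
My plan is to give a direct proof from the definitions, exploiting the observation already highlighted in the paragraph preceding the proposition: a Morse sequence on $K$ has exactly one critical simplex if and only if $K$ collapses onto a vertex. Since $\vec{b}(K) = (1,0,\dots,0)$, perfectness is equivalent to $\vec{c}(\ms) = (1,0,\dots,0)$, i.e., exactly one critical face.

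For the ``if'' direction, I assume $K$ is collapsible. By definition, there exists a sequence $\langle K = M_0, M_1, \dots, M_k = \{\{a\}\}\rangle$ such that each $M_i$ is an elementary collapse of $M_{i-1}$. Reversing this sequence and prepending the empty complex gives $\ms = \langle \emptyset, M_k, M_{k-1}, \dots, M_0 = K\rangle$. The first step is an elementary filling adding the critical vertex $\{a\}$, and every subsequent step is an elementary expansion (the formal inverse of the corresponding collapse). Hence $\ms$ is a Morse sequence with a single critical face, a $0$-simplex, so $\vec{c}(\ms) = (1,0,\dots,0) = \vec{b}(K)$ and $\ms$ is perfect.

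For the ``only if'' direction, suppose $K$ admits a perfect Morse sequence $\ms = \langle \emptyset = K_0, \dots, K_k = K\rangle$. Perfectness forces $\vec{c}(\ms) = \vec{b}(K) = (1,0,\dots,0)$, so $\ms$ has a single critical face. As observed just after Definition \ref{def:seq1}, the first step of any nontrivial Morse sequence must be a filling producing a vertex; by uniqueness this is the only critical face of $\ms$. Consequently, for every $i \geq 2$, $K_i$ is an elementary expansion of $K_{i-1}$. Reversing the sequence from $K_k = K$ down to $K_1$ exhibits $K$ as collapsing onto the vertex $K_1$, so $K$ is collapsible.

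There is no substantive obstacle; the proof is essentially a reading of Definition \ref{def:seq1} together with the structural fact that the only way to have zero critical faces in positive dimension is to use expansions exclusively after the initial vertex. The only point that deserves care is to check that reversing an expansion yields a collapse (and vice versa), which is immediate from the definitions: if $(\sigma,\tau)$ is the free pair of the expansion $K_{i-1} \to K_i$, then it remains a free pair of $K_i$, and removing $\{\sigma,\tau\}$ recovers $K_{i-1}$.
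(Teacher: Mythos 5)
Your proof is correct and follows essentially the same route as the paper, which establishes both directions in the two paragraphs preceding the proposition: reversing a collapse sequence (prepending $\emptyset$) yields a Morse sequence with a single critical vertex, and conversely a perfect sequence under $\vec{b}(K)=(1,0,\dots,0)$ has only the initial vertex as critical face, so reversing it exhibits a collapse onto that vertex. No gaps.
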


To conclude this section, we underline a fundamental link  between Morse sequences and homology.
Let $K = L \cup \{\sig\}$ be an elementary filling of $L$, with $\sig \in K^{(p)}$, $p \geq 1$. It is well-known that
the addition of $\sig$  will either increase $b_p(L)$ by 1 or
decrease $b_{p-1}(L)$ by 1 (but not both), all other Betti numbers being unaffected; see Lemma 3.36 of \cite{Sco19}.
Also, it is well-known that, if $K$ is an elementary expansion of $L$, then all Betti numbers are unaffected.
This leads us to the following definition where each critical simplex is either positive or negative.

\begin{definition} \label{def:seq3}
Let $\ms = \langle K_0,...,K_k  \rangle$ be a Morse sequence and $\mss = \langle \hat{\sigma}_1,..., \hat{\sigma}_k\rangle$.
Let $\hat{\sig}_i = \sig_i$ be a critical $p$-simplex for $\ms$.
We say that $\sig_i$ is \emph{positive for $\ms$} if $i = 1$ or if $b_p(K_i) = b_p(K_{i-1}) +1$. We say that 
$\sig_i$ is \emph{negative for $\ms$}
if $i \geq 2$ and if $b_{p-1}(K_i) = b_{p-1}(K_{i-1}) - 1$, with $p \geq 1$. 
\end{definition}

We check at once that a Morse sequence $\ms$ is perfect if and only if all critical simplexes 
for $\ms$ are positive for $\ms$.
For example: \\
- A perfect Morse sequence $\ms$ on a collapsible complex is made of a single critical simplex that is positive for $\ms$
(see Proposition~\ref{pro:seq01}). \\
-  The simplexes $a$, $b$, $c$, and $d$ are positive for the Morse sequence of the torus given Fig.~\ref{fig:MorseSequenceTorus}. \\
- The simplexes $a$ and $b$ are positive for the Morse sequence 
of the dunce hat given Fig.~\ref{fig:MorseSequenceDunceHat1}; the simplex $c$ is negative for this sequence. 

We observe also that, from the above, we deduce immediately  the following classical inequality:
each Betti number $b_p$ is lower or equal to the number $c_p$ of $p$-simplexes that are critical  
for $\ms$ (see Th.~4.1 of \cite{Sco19}). 
It follows that a Morse sequence is necessarily optimal whenever it is perfect.

\section{Discrete vector fields and gradient paths}
\label{sec:gradient}

From the definition of a Morse sequence, we can 
immediately derive the following notion of a gradient vector field.

\begin{definition} \label{def:seq2}
The \emph{gradient vector field of a Morse sequence
$\ms$} is the set of all regular pairs for $\ms$.
We say that two Morse sequences $\ms$ and $\overrightarrow{V}$ on
a given complex $K$ are \emph{equivalent} if
they have the same gradient vector field.
\end{definition}

Let us recall the definitions of a discrete vector field and a $p$-gradient path, see Definitions 2.43 and 2.46 of \cite{Sco19}.

Let $K$ be a complex and $V$ be a set of pairs $(\sigma,\tau)$, with $\sigma, \tau \in K$ and
$\sigma \in \partial \tau$.
We say that $V$ is a  \emph{(discrete) vector field on $K$}
if each simplex of $K$ is in at most one pair of $V$.
We say that $\sigma \in K$ is \emph{critical for $V$} if $\sig$ is not in a pair of $V$.

Let $V$ be a vector field on a complex $K$.
A \emph{$p$-gradient path in $V$ (from $\sig_0$ to $\sig_k$)}
is a sequence
$\pi = \langle \sig_0,\tau_0,\sig_1,\tau_1,...,\sig_{k-1},\tau_{k-1}, \sigma_{k}\rangle$,
with $k \geq 0$, composed of faces $\sig_i \in K^{(p)}$, $\tau_i \in K^{(p+1)}$
such that, for all $i \in [0,k-1]$,
$(\sig_i,\tau_{i})$ is in~$V$,
$\sigma_{i+1} \subset \tau_i$, and $\sig_{i+1} \not= \sig_i$.
This sequence $\pi$ is said to be \emph{trivial} if $k =0$, that is, if 
$\pi = \langle \sig_0 \rangle$; otherwise, if $k \geq 1$, we say that $\pi$ is \emph{non-trivial}.
Also, the sequence $\pi$ is \emph{closed} if $\sig_0 = \sig_k$.
We say that a vector field $V$ is \emph{acyclic} if $V$ contains
no non-trivial closed $p$-gradient path. 

Now, let $\ms$ be a Morse sequence on $K$. Then, the gradient vector field of $\ms$ is clearly a vector field.
We say that a $p$-gradient path in this vector field is a \emph{$p$-gradient path in $\ms$}. 

In the sequel of this section,
we show that a Morse sequence may be seen as an alternative way to represent the
gradient vector field of an arbitrary discrete Morse function. A classical result of discrete Morse t\textit{}heory states that
a discrete vector field $V$ is the gradient vector field of a
discrete Morse function if and only if $V$ is acyclic (Theorem 2.51 of \cite{Sco19}). Thus, in order to achieve this goal,
we establish the equivalence between gradient vector fields of Morse sequences and acyclic vector fields (Theorem \ref{th:DVF}).
Before, we introduce the notion of a maximal $p$-gradient path. Such a path allows us to extract, in the top dimension of a complex $K$,
either a critical simplex or a free pair for $K$ (Lemma \ref{lem:DVF1}). This formalizes  an incremental
deconstruction of the complex, which is usually 
given with certain Morse functions, see Remark 13 of \cite{Ben17}.

Let $V$ be a vector field on $K$ and 
let $\pi = \langle \sig_0,\tau_0,...,\sig_{k-1},\tau_{k-1}, \sigma_{k} \rangle$ be a $p$-gradient path in $V$.
We say that a pair of simplexes $(\eta,\nu)$ is an \emph{extension of~$\pi$ (in $V$)}
if $\langle \eta,\nu,\sig_0,\tau_0,...,\sig_{k-1},\tau_{k-1}, \sigma_{k} \rangle$ or if $\langle \sig_0,\tau_0,...,\sig_{k-1},\tau_{k-1}, \sigma_{k}, \nu, \eta \rangle$ is a $p$-gradient path in $V$.
We say that $\pi$ is \emph{maximal (in $V$)} if $\pi$ has  no extension in~$V$. 
If $V$ is acyclic, it can be checked that, for any 
$p \geq 0$, there exists a maximal $p$-gradient path in $V$.
To see this point, we can pick an arbitrary (possibly trivial) $p$-gradient path and 
extend it iteratively with extensions. If $V$ is acyclic, we obtain 
a maximal path after a finite number of extensions.

 \begin{lemma}[deconstruction] \label{lem:DVF1}
Let $V$ be an acyclic vector field on $K$, with $dim(K) = d$.
Then, at least one of the following holds: \\
1) There exists a facet $\tau$ of $K$, with $dim(\tau) = d$, that is critical for $V$. \\
2) There exists a pair $(\sig,\tau)$ in $V$, with $dim(\tau) = d$, that is a free pair for $K$.
\end{lemma}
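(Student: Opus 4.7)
The plan is to split on whether every $d$-simplex of $K$ is paired in $V$. If some $d$-simplex $\tau$ is critical for $V$, then because $\dim(K) = d$ the face $\tau$ is automatically a facet, so condition~1 of the lemma holds. I therefore assume from now on that every $d$-simplex of $K$ lies in some pair of $V$. Note that a $d$-simplex admits no coface in $K$, so any such pair must be of the form $(\sigma,\tau)$ with $\dim(\sigma) = d-1$ and $\dim(\tau) = d$.

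The heart of the argument is to exhibit a free pair by looking at a carefully seeded maximal $(d-1)$-gradient path. Since $\dim(K) = d$ there is at least one $d$-simplex, hence at least one pair $(\sigma_0,\tau_0) \in V$ with $\dim(\tau_0) = d$; picking any other $(d-1)$-face $\sigma_1$ of $\tau_0$ yields a length-one $(d-1)$-gradient path $\langle \sigma_0,\tau_0,\sigma_1 \rangle$. I then apply extensions iteratively; by the acyclicity of $V$ and the discussion preceding the lemma, this process terminates in a maximal $(d-1)$-gradient path. The resulting maximal path $\pi = \langle \sigma'_0, \tau'_0, \ldots, \sigma'_k \rangle$ still has $k \geq 1$, because each extension strictly increases the index.

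To conclude, I read off the consequence of left-maximality of $\pi$. Left-maximality means there is no pair $(\eta,\nu) \in V$ with $\sigma'_0 \subset \nu$ and $\eta \neq \sigma'_0$. Combined with the standing hypothesis that every $d$-simplex is paired, this forces every $d$-simplex $\nu$ containing $\sigma'_0$ to be paired with $\sigma'_0$ itself; since $V$ is a matching, $\sigma'_0$ has at most one partner, so only one such $\nu$ exists, and this $\nu$ must be the $\tau'_0$ guaranteed by $k \geq 1$. Because $\dim(K) = d$, no face of $K$ of dimension larger than $d$ contains $\sigma'_0$, so $\tau'_0$ is the only face of $K$ strictly above $\sigma'_0$. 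Hence $(\sigma'_0, \tau'_0)$ is a free pair for $K$ with $\dim(\tau'_0) = d$, which yields condition~2.

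The main obstacle I anticipate is the trivial-path subtlety: a generic maximal $(d-1)$-gradient path can shrink to a single critical $(d-1)$-simplex, from which no top-dimensional free pair can be read off. The remedy, hard-wired into the construction above, is to seed the iterative extension with an honest length-one path built on an already-present $d$-pair, so that the invariant $k \geq 1$ is preserved under extensions and the leftmost pair $(\sigma'_0,\tau'_0)$ is available at the end.
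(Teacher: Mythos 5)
Your proposal is correct and follows essentially the same route as the paper: reduce to the case where every $d$-simplex is paired, seed a non-trivial $(d-1)$-gradient path from such a pair, extend it to a maximal path using acyclicity, and read off a free pair from left-maximality (the paper phrases this last step as a contradiction, you phrase it directly via the matching property of $V$, but the content is identical). Your explicit attention to the trivial-path pitfall and to preserving $k \geq 1$ matches the paper's insistence on a non-trivial seed.
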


\begin{proof}
If $K$ has a $d$-face that is critical for $V$, then we are done. 
Suppose there is no such faces in $K$. If $d= 0$, then all the $0$-faces of $K$ are critical, thus we must have $d \geq 1$.
Let $\tau$ be an arbitrary $d$-face of $K$.
Since $\tau$ is not critical, 
there exists a pair $(\sig,\tau)$ that is in $V$.
Since $d \geq 1$, there is a face $\sig' \in K$
such that $\pi' = \langle \sig,\tau,\sig' \rangle$ is a $(d-1)$ gradient path in~$V$.
By iteratively extending $\pi'$ with extensions we obtain 
a maximal $(d-1)$-gradient path in $V$ that is non-trivial.
Let $\pi = \langle \sig_0,\tau_0,...,\sig_{k-1},\tau_{k-1}, \sigma_{k} \rangle$ be such a path, we have $k \geq 1$.
If $(\sigma_0,\tau_0)$ is a free pair for $K$, then we are done.
Otherwise, $\sig_0$ must be a subset of a $d$-simplex  $\nu$, with $\nu \not= \tau_0$. By our hypothesis $\nu$ is not critical for $V$. Since $\nu$ is a facet
for $K$, there must exist a $(d-1)$-simplex $\eta$, $\eta \not= \sig_0$, such that $(\eta,\nu)$ is in $V$.
In this case, the path $\pi' = \langle \eta, \nu, \sigma_0, \tau_0,...,\sig_{k-1},\tau_{k-1}, \sigma_{k} \rangle$ would be a $(d-1)$-gradient path in~$V$.
Thus, the path $\pi$ would not be maximal, a contradiction:
the pair $(\sigma_0,\tau_0)$ must be a free pair for $K$. \qed 
\end{proof}

\begin{theorem} \label{th:DVF}
Let $K$ be a simplicial complex. A vector field $V$ on $K$ is acyclic if and only if $V$ is the gradient vector field of a
Morse sequence on $K$.
\end{theorem}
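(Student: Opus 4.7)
The plan is to prove both implications separately, with the backward direction leveraging the deconstruction lemma and the forward direction relying on a tracking argument based on the order in which simplices appear in the sequence.

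For the forward direction, I would let $\ms = \langle K_0,\ldots,K_k \rangle$ be a Morse sequence with gradient vector field $V$ and suppose for contradiction there is a non-trivial closed $p$-gradient path $\pi = \langle \sig_0,\tau_0,\ldots,\sig_{m-1},\tau_{m-1},\sig_m \rangle$ in $V$ with $\sig_0 = \sig_m$. For each pair $(\sig_i,\tau_i) \in V$, denote by $j_i$ the step at which it is added, so $K_{j_i} = K_{j_i-1} \cup \{\sig_i,\tau_i\}$. The key observation is that since $K_{j_i}$ must be a simplicial complex and $(\sig_i,\tau_i)$ is a free pair of $K_{j_i}$, every proper face of $\tau_i$ other than $\sig_i$ must already lie in $K_{j_i-1}$; in particular $\sig_{i+1} \in K_{j_i-1}$. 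Because $\sig_{i+1}$ is regular, it enters only as part of its pair $(\sig_{i+1},\tau_{i+1})$, so $j_{i+1} < j_i$. Iterating gives $j_0 > j_1 > \cdots > j_{m-1}$. Applying the same observation to the last pair, $\sig_m \subset \tau_{m-1}$ and $\sig_m \neq \sig_{m-1}$ force $\sig_m = \sig_0 \in K_{j_{m-1}-1}$, hence $j_0 < j_{m-1}$, a contradiction.

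For the backward direction, I would proceed by induction on $|K|$. The base case $K = \emptyset$ uses the trivial sequence $\langle \emptyset \rangle$. For the inductive step, with $V$ acyclic on $K$, I apply Lemma~\ref{lem:DVF1} to get either (1) a critical facet $\tau$ of top dimension, in which case I set $K' = K \setminus \{\tau\}$ and $V' = V$; or (2) a free pair $(\sig,\tau)$ of $K$ with $\tau$ of top dimension, in which case I set $K' = K \setminus \{\sig,\tau\}$ and $V' = V \setminus \{(\sig,\tau)\}$. In both cases $K'$ is a simplicial complex and $V'$ is a vector field on $K'$; moreover $V'$ is acyclic, since any non-trivial closed gradient path in $V'$ uses only pairs of $V$ and faces of $K$, so would already be a non-trivial closed gradient path in $V$. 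By induction $V'$ is the gradient vector field of a Morse sequence $\ms'$ on $K'$. Appending an elementary filling by $\tau$ (case 1) or an elementary expansion by $(\sig,\tau)$ (case 2) yields a Morse sequence $\ms$ on $K$ whose set of regular pairs is exactly $V$.

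The delicate point I would expect to linger on is the forward direction, specifically verifying that the ``every face of $\tau_i$ besides $\sig_i$ lies in $K_{j_i-1}$'' claim holds rigorously (it uses that $K_{j_i}$ is a simplicial complex and that the free pair property forbids any other face of $K_{j_i}$ from containing $\sig_i$), and then closing the loop cleanly so that the contradiction appears. The backward direction is more straightforward: the only subtlety is checking that $V'$ is still acyclic after removing one pair, which is immediate since $V' \subseteq V$, and that the appended operation is indeed an elementary expansion or filling, which is guaranteed by the construction of $K'$ from the lemma.
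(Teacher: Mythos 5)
Your proof is correct and follows essentially the same route as the paper: the forward direction is the paper's argument that insertion times strictly decrease along a gradient path (your step indices $j_i$ play the role of the paper's function $\rho$), and the backward direction is the paper's iterative deconstruction via Lemma~\ref{lem:DVF1}, merely recast as an induction on $|K|$.
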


\begin{proof}
i) Let $\ms = \langle \emptyset = K_0,...,K_i,...,K_l = K  \rangle$ be a Morse sequence on $K$ and let $V$ be the gradient vector field of $\ms$.
For each $\sig \in K$, let $\rho(\sig)$ be the index $i$ such that 
$\sig \in K_i$ and $\sig \not\in K_{i-1}$. 
Now, let $\pi = \langle \sig_0,\tau_0,\sig_1,\tau_1,...,\sig_{k-1},\tau_{k-1}, \sigma_{k}\rangle$, $k \geq 1$, be a non-trivial $p$-gradient path in $V$. 
For all $i \in [0,k-1]$, $(\sig_i,\tau_{i})$ is in~$V$, thus 
$\rho(\sig_i) = \rho(\tau_i)$. Since $\sigma_{i+1} \subset \tau_i$ and since 
$\ms$ is a filtration, we have $\rho(\sig_{i+1}) \leq \rho(\tau_i)$. 
Since $\sig_{i+1} \not= \sig_i$ the pair $(\sig_{i+1},\tau_i)$ is not a regular pair for 
$\ms$, thus we have $\rho(\sig_{i+1}) < \rho(\tau_i)$.
It follows that, for all $i \in [0,k-1]$, we have $\rho(\sig_{i+1}) < \rho(\sig_i)$.
This gives $\rho(\sig_{k}) < \rho(\sig_0)$. It means that $\sig_{k} \not= \sig_{0}$,
and so the path $\pi$ cannot be closed. Consequently the vector field $V$ is acyclic. \\
ii) 
Let $V$ be an acyclic vector field on $K$, with $dim(K) = d$. \\
1) Suppose there exists a facet $\tau$ of $K$, with $dim(\tau) = d$, that is critical for $V$. \\
Let $K' = K \setminus \{\tau\}$ and $V' = V$.
Then, the set $V'$ is also an acyclic vector field on the complex $K'$. \\
2) Suppose there exists a pair $(\sig,\tau)$ in $V$, with $dim(\tau) = d$, that is a free pair for $K$.
Clearly, the set $V' = V \setminus \{(\sigma, \tau) \}$ is also an acyclic vector field on the complex
$K' = K \setminus \{\sigma, \tau \}$.\\
By 1), 2), and by Lemma \ref{lem:DVF1}, we can build inductively two sequences $\overleftarrow{W}= \langle K = K_0,...,K_k = \emptyset \rangle$
and $\langle V = V_0,...,V_k = \emptyset \rangle$ such that, for each $i \in [1,k]$: \\
- either $K_i$ is an elementary perforation of $K_{i-1}$ and $V_i = V_{i-1}$, \\
- or $K_i = K_{i-1} \setminus \{\sigma, \tau \}$ is an elementary collapse of $K_{i-1}$ and $V_i = V_{i-1} \setminus \{(\sigma, \tau) \}$.\\
By considering the inverse of $\overleftarrow{W}$ we obtain
the sequence $\ms = \langle K_k= K'_0,...,K'_k = K_0 \rangle$, which is such that, for each $i \in [1,k]$,
either $K'_i$ is an elementary expansion of $K'_{i-1}$, or $K'_i$ is an elementary filling of $K'_{i-1}$.
In other words, $\ms$ is a Morse sequence on $K_0 = K$; the gradient field of $\ms$ is precisely $V,$ as required. \qed
\end{proof}

\section{Morse functions and Morse sequences}
\label{sec:functions}

Discrete Morse theory is classically introduced through the concept of a discrete Morse function. In this section we 
show that it is possible, in a straightforward manner,  to make a link between Morse sequences and these functions. 

We first introduce the notion of a Morse function on a 
Morse sequence $\ms$.

\begin{definition}
Let $\ms$ be a Morse sequence on $K$ and
$\mss = \langle \hat{\sigma}_1,\ldots, \hat{\sigma}_k\rangle$.
A map $f \colon K \to \bb{Z}$ is a \emph{Morse function on $\ms$}  whenever $f$ satisfies the two conditions:

1) If $\hat{\sigma}_i = \sig_i$ is critical for $\ms$ and $\sig \in \partial(\sig_i)$, then $f(\sig_i) > f(\sig)$.

2) If $\hat{\sigma}_i = (\sig_i,\tau_i)$ is regular for $\ms$, then $f(\sig_i) \geq f(\tau_i)$.     
\end{definition}

Now, we can check that the following definition of a Morse function on a simplicial complex $K$ is equivalent to the classical one \cite{For98,For02}.

Let $K$ be a simplicial complex and let $f \colon K \to \bb{Z}$ be a map on $K$. Let $V$ be the set of all pairs $(\sigma, \tau)$,
with $\sig, \tau \in K$, such that $\sig \in \partial(\tau)$ and $f(\sig) \geq f(\tau)$.
If each $\nu \in K$ is in at most one pair in $V$,
we say that $f$ is  a \emph{Morse function  on~$K$}, and $V$ is the \emph{gradient vector field of $f$}.
We say that two Morse functions on $K$ are \emph{equivalent} if they have the same gradient vector field.

Let $f$ be  a Morse function  on $K$, and $V$ be the gradient vector field of $f$.
From the above definition, the set $V$ is a discrete vector field on $K$.
If $\pi = \langle \sig_0,\tau_0,\sig_1,\tau_1,...,\sig_{k-1},\tau_{k-1}, \sigma_{k}\rangle$ is a $p$-gradient path in $V$,
we see that we must have $f(\sig_i) \geq f(\tau_i)$, and also $f(\tau_i) > f(\sig_{i+1})$.
Thus, $f(\sig_0) > f(\sig_k)$ whenever $k \geq 1$. It means that $V$ contains
no non-trivial closed $p$-gradient path. In other words, we have the classical result:

 \begin{proposition} \label{pro:mf1}
If $f$ is a Morse function on $K$, then the gradient vector field of $f$ 
is an acyclic vector field.
 \end{proposition}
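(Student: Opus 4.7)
The plan is to follow the outline already sketched in the paragraph preceding the statement, namely to show that $f$ strictly decreases along any non-trivial gradient path, whence no such path can close up.

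First I would fix an arbitrary non-trivial $p$-gradient path $\pi = \langle \sigma_0,\tau_0,\sigma_1,\tau_1,\ldots,\sigma_{k-1},\tau_{k-1},\sigma_k\rangle$ in $V$, with $k \geq 1$. Two inequalities have to be established for each $i \in [0,k-1]$: the weak inequality $f(\sigma_i) \geq f(\tau_i)$ and the strict inequality $f(\tau_i) > f(\sigma_{i+1})$. The first is immediate from the definition of $V$, since $(\sigma_i,\tau_i) \in V$ forces $\sigma_i \in \partial(\tau_i)$ and $f(\sigma_i) \geq f(\tau_i)$.

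The second inequality is the one step that requires a short argument, and is the only real obstacle. By the definition of a gradient path we have $\sigma_{i+1} \subset \tau_i$ with $\sigma_{i+1} \neq \sigma_i$, so $\sigma_{i+1} \in \partial(\tau_i)$. If we had $f(\sigma_{i+1}) \geq f(\tau_i)$, then by the very definition of the set $V$ associated with $f$, the pair $(\sigma_{i+1},\tau_i)$ would belong to $V$. But $(\sigma_i,\tau_i)$ is already in $V$, and since $f$ is a Morse function on $K$, the face $\tau_i$ is in at most one pair of $V$; combined with $\sigma_{i+1} \neq \sigma_i$ this yields a contradiction. Hence $f(\sigma_{i+1}) < f(\tau_i)$.

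Chaining these inequalities gives
\[
f(\sigma_0) \geq f(\tau_0) > f(\sigma_1) \geq f(\tau_1) > \cdots \geq f(\tau_{k-1}) > f(\sigma_k),
\]
so $f(\sigma_0) > f(\sigma_k)$, and in particular $\sigma_0 \neq \sigma_k$. Therefore $\pi$ cannot be closed, and since $\pi$ was an arbitrary non-trivial $p$-gradient path in $V$ for arbitrary $p$, the vector field $V$ is acyclic, which is what had to be proved.
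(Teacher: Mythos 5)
Your proof is correct and follows essentially the same route as the paper: establish $f(\sigma_i) \geq f(\tau_i)$ and $f(\tau_i) > f(\sigma_{i+1})$ along any gradient path, chain the inequalities to get $f(\sigma_0) > f(\sigma_k)$, and conclude no non-trivial path can be closed. The only difference is that you spell out the justification of the strict inequality (via the at-most-one-pair condition on $\tau_i$), which the paper leaves implicit.
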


Let $\ms$ be a Morse sequence on $K$. 
We see that a Morse function on $\ms$ is indeed a Morse function
on $K$, the gradient vector field of this Morse function is
precisely the gradient vector field of $\ms$.
Conversely, by Proposition \ref{pro:mf1} and by Theorem~\ref{th:DVF}, if $f$ is a Morse function on $K$, then there exists a
Morse sequence $\ms$ on $K$ which has the same gradient vector field as $f$. It is easy to check that~$f$ is also a Morse function 
on $\ms$. This leads us to the following result.

 \begin{theorem} \label{th:mf3}
If $f$ is a Morse function on $K$, then there exists a Morse sequence~$\ms$ 
on $K$ such that $f$ is a Morse function on $\ms$.
Furthermore, any Morse function on $\ms$ is equivalent to $f$.
 \end{theorem}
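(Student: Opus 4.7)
The plan is to chain Proposition~\ref{pro:mf1} with Theorem~\ref{th:DVF} to produce $\ms$, then to read off the two defining conditions of a Morse function on $\ms$ directly from the definition of the gradient vector field of $f$.

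Starting from a Morse function $f$ on $K$, let $V$ be its gradient vector field. Proposition~\ref{pro:mf1} gives that $V$ is acyclic, and Theorem~\ref{th:DVF} then supplies a Morse sequence $\ms$ on $K$ whose gradient vector field is precisely $V$. To verify that $f$ is a Morse function on this $\ms$, the two conditions fall out at once. For condition~(2), any regular pair $(\sig_i,\tau_i)$ of $\ms$ belongs to $V$, and since $V$ is the set of pairs $(\sig,\tau)$ with $\sig\in\partial(\tau)$ and $f(\sig)\geq f(\tau)$, we conclude $f(\sig_i)\geq f(\tau_i)$. For condition~(1), if $\sig_i$ is critical for $\ms$ and $\sig\in\partial(\sig_i)$, then $\sig_i$ lies in no pair of $V$, so $(\sig,\sig_i)\notin V$; unpacking the definition of $V$ forces $f(\sig)<f(\sig_i)$.

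For the \emph{furthermore} clause, let $g$ be any Morse function on $\ms$. I would invoke the remark recorded just before the theorem statement: a Morse function on $\ms$ is itself a Morse function on $K$, and its gradient vector field coincides with the gradient vector field of $\ms$, which here is $V$. Since $f$ also has gradient vector field $V$, the functions $f$ and $g$ share the same gradient vector field on $K$ and are thus equivalent by definition.

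The main obstacle lies in justifying this cited remark, i.e.\ in showing that for any Morse function $g$ on $\ms$ the set $W=\{(\sig,\tau) : \sig\in\partial(\tau),\ g(\sig)\geq g(\tau)\}$ equals $V$. The inclusion $V\subseteq W$ is immediate from condition~(2). The reverse inclusion is the delicate step, and proceeds by case analysis on the role of $\tau$ in $\ms$: if $\tau$ is critical, condition~(1) contradicts $g(\sig)\geq g(\tau)$, so $\tau$ must be the upper entry $\tau_i$ of some regular pair $(\sig_i,\tau_i)$; one then has to argue, using the filtration structure of $\ms$ together with condition~(2), that the face $\sig\in\partial(\tau_i)$ appearing in $W$ necessarily coincides with the mate $\sig_i$. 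Once $W=V$ is established, $g$ is automatically a Morse function on $K$ with gradient vector field $V$, and equivalence with $f$ follows.
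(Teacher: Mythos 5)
Your construction of $\ms$ and the verification that $f$ is a Morse function on $\ms$ follow exactly the paper's route (Proposition~\ref{pro:mf1} combined with Theorem~\ref{th:DVF}), and that half is complete and correct: condition (2) is just the definition of $V$ read off on regular pairs, and condition (1) holds because a critical $\sig_i$ lies in no pair of $V$, so no $\sig\in\partial(\sig_i)$ can satisfy $f(\sig)\geq f(\sig_i)$.

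The gap is in the \emph{furthermore} clause, and you have correctly located it but not closed it. The paper itself only asserts (``We see that\ldots'') that a Morse function $g$ on $\ms$ is a Morse function on $K$ whose gradient vector field equals that of $\ms$; you defer to the same assertion and then concede that the inclusion $W\subseteq V$ is ``the delicate step.'' There are two problems. First, your case analysis on $\tau$ is incomplete: besides $\tau$ critical and $\tau$ the upper entry of a regular pair, $\tau$ can be the \emph{lower} entry $\sig_i$ of a regular pair, and conditions (1)--(2) say nothing about $g(\sig)$ versus $g(\sig_i)$ for $\sig\in\partial(\sig_i)$. Second, and more seriously, $W\subseteq V$ does not follow from conditions (1)--(2) at all. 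Take $K=\{\{a\},\{b\},\{a,b\}\}$ with $\mss=\la \{a\},(\{b\},\{a,b\})\ra$, and set $g(\{a\})=2$ and $g(\{b\})=g(\{a,b\})=1$. Condition (1) is vacuous (the critical simplex $\{a\}$ has empty boundary) and condition (2) holds, so $g$ is a Morse function on $\ms$; yet $(\{a\},\{a,b\})\in W$, so $W\neq V$ and $\{a,b\}$ lies in two pairs of $W$, i.e.\ $g$ is not even a Morse function on $K$. Hence the filtration structure you invoke cannot rescue the argument: closing the gap requires strengthening the notion of a Morse function on $\ms$ (e.g.\ demanding $g(\sig)<g(\tau)$ for every incidence $\sig\in\partial(\tau)$ that is not a regular pair), which neither the paper's definition nor your proposal supplies. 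As written, the furthermore clause remains unproved.
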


We introduce hereafter a particular kind of Morse 
function. 
Since a Morse sequence is a filtration, the following
function $f$ is indeed a Morse function on~$\ms$.

\begin{definition}
Let $\ms$ be a Morse sequence on $K$ and
$\mss = \langle \hat{\sigma}_1,\ldots, \hat{\sigma}_k\rangle$.
The \emph{canonical Morse function of $\ms$}  is the map 
$f \colon K \to \bb{Z}$ such that:

1) If $\hat{\sigma}_i = \sig_i$
is critical for $\ms$, then $f(\sig_i) =i$.

2) If $\hat{\sigma}_i = (\sig_i,\tau_i)$
is regular for $\ms$, then $f(\sig_i) = f(\tau_i) =i$. 
\end{definition}

As a consequence of Theorem \ref{th:mf3}, any Morse function 
on $K$ is equivalent to a canonical Morse function.

We note that a canonical Morse function $f$ is \emph{flat}, that is,
we have  $f(\sig) = f(\tau)$ whenever $(\sig,\tau)$ is in the gradient vector field of $f$
(Definition 4.14 of \cite{Sco19}).
Also~$f$ is \emph{excellent}, that is, all values of the critical simplexes
are distinct (Definition  2.31 of \cite{Sco19}).
In fact, a canonical Morse function has the three properties
which define a basic Morse function (see \cite{Ben16} and also Definition 2.3 of \cite{Sco19}).

Let $f \colon K \to \bb{Z}$ be a map on $K$. 
We say that $f$ is a \emph{basic Morse function}
if $f$ satisfies the properties:

1) \emph{monotonicity}: we have $f(\sig) \leq f(\tau)$ whenever $\sig \subseteq \tau$;

2) \emph{semi-injectivity}: for each $i \in \bb{Z}$, the cardinality of $f^{-1}(i)$ is at most 2;

3) \emph{genericity}: if $f(\sig) = f(\tau)$, then either  $\sig \subseteq \tau$ or $\tau \subseteq \sig$. \\
We observe that, if $f$ is a basic Morse function on $K$,
then we can build a Morse sequence $\ms$
if we pick the simplexes of $K$ by increasing values of $f$. 
For each~$i$, $f^{-1}(i)$ gives a critical simplex if 
the cardinality of $f^{-1}(i)$ is one, and
$f^{-1}(i)$ gives a regular pair if 
the cardinality of $f^{-1}(i)$ is two.

Let $f$ and $g$  be two basic Morse functions on $K$. 
We say that $f$ and $g$ are \emph{strongly equivalent}
if $f$ and $g$ induce the same order on $K$. That is, we have 
$f(\sig) \leq f(\tau)$ if and only if 
$g(\sig) \leq g(\tau)$.

With the above scheme for building a Morse sequence from 
a basic Morse function, we obtain the following result.

\begin{proposition}
Let $f$ be a basic Morse function on $K$.
There exists one and only one Morse sequence $\ms$ such that 
the canonical Morse function of $\ms$ is strongly equivalent to $f$.
\end{proposition}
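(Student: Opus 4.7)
The plan is to separate existence from uniqueness.

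For existence, I would invoke the construction sketched just before the statement: list the simplices of $K$ in order of increasing $f$-value. By semi-injectivity, each level set $f^{-1}(i)$ has one or two elements. If it has two, say $\{\sig,\tau\}$, genericity forces (up to relabelling) $\sig\subset\tau$, and monotonicity combined with semi-injectivity rules out any intermediate simplex, since a $\rho$ with $\sig\subset\rho\subset\tau$ would satisfy $f(\rho)=i$ and yield a third element in $f^{-1}(i)$; hence $\dim\tau=\dim\sig+1$, and I would place $\sig$ before $\tau$ in the list. I then check that the resulting sequence $\ms$ is a Morse sequence. When $|f^{-1}(i)|=1$, adding the unique simplex $\sig$ is an elementary filling because monotonicity prevents any simplex already present from properly containing $\sig$. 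When $|f^{-1}(i)|=2$, the pair $(\sig,\tau)$ is free in the complex after step $i$, since any simplex $\eta$ of that complex containing $\sig$ satisfies $f(\eta)\leq i$ by inclusion and $f(\eta)\geq i$ by monotonicity, so $\eta\in\{\sig,\tau\}$ and thus $\eta=\tau$. The canonical Morse function $g$ of $\ms$ then takes the common value $i$ on the $i$-th level set and induces the same total preorder on $K$ as $f$, so $g$ is strongly equivalent to $f$.

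For uniqueness, let $\ms'$ be any Morse sequence on $K$ whose canonical Morse function $g'$ is strongly equivalent to $f$, hence to $g$. Any canonical Morse function assigns values $1,2,\ldots$, one per step, each attained by a single critical simplex or by the two members of one regular pair. Strong equivalence of $g$ and $g'$ forces the partitions of $K$ into level sets, and the total ordering of these level sets, to coincide; since both functions enumerate their level sets by consecutive positive integers starting at $1$, this forces $g=g'$ as maps. At each step $i$ the common level set is either a singleton $\{\sig\}$, giving $\hat{\sigma}_i=\hat{\sigma}'_i=\sig$, or a pair $\{\sig,\tau\}$ with $\sig\subset\tau$, in which case both sequences must encode the free pair $(\sig,\tau)$ at step $i$, since the lower-dimensional simplex is the only possible free face. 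Therefore $\ms=\ms'$.

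The main obstacle is the existence check that a two-element level set yields a genuine free pair in the intermediate complex; this requires all three axioms of a basic Morse function simultaneously. Once that is in place, the rest of existence is immediate, and uniqueness reduces to recognizing that a canonical Morse function is completely determined, as a map, by the ordered partition of $K$ it induces, together with the rigidity of the free-pair encoding of a regular step.
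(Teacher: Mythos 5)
Your proof is correct and takes essentially the same approach as the paper: the paper gives no separate proof, asserting the proposition as a direct consequence of the scheme sketched just before it (picking simplices by increasing $f$-value, with singleton level sets giving critical simplices and two-element level sets giving regular pairs), and your argument is a careful verification of exactly that construction, plus the natural rigidity argument for uniqueness. Nothing is missing.
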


\section{Maximal Morse sequences}
\label{sec:maximal}

Building a gradient vector field from a complex is a fundamental issue in discrete Morse theory.
This problem is equivalent to building a Morse sequence $\ms$ from a complex $K$.
Clearly, the two following schemes are two basic ways to achieve this goal: \\
1) \emph{The increasing scheme}. We build $\ms$ from the left to the right. Starting from~$\emptyset$, we obtain $K$ by iterative expansions and fillings.
We say that this scheme is \emph{maximal} if
we make a filling only if no expansion can be made. \\
2) \emph{The decreasing scheme}. We build $\ms$ from the right to the left. Starting from $K$, we obtain $\emptyset$ by iterative collapses and perforations.
We say that this scheme is \emph{maximal} if we make a perforation only if no collapse can be made. 

Clearly, any Morse sequence may be obtained by an increasing scheme and any Morse sequence may be obtained by a decreasing scheme.
By Theorem \ref{th:DVF}, it means that an arbitrary acyclic vector field may be obtained by each of these two schemes. 

Now, let us focus our attention on maximal increasing and maximal decreasing schemes. The purpose of these two schemes
is to try to minimize the number of critical
simplexes. Thus, a filling or a perforation is made
only if there is no other choice.
The examples given in Fig.~\ref{fig:MorseSequenceTorus}
and Fig.~\ref{fig:MorseSequenceDunceHat1} are instances of a maximal increasing scheme.

First, it can be seen that  the algorithm \emph{Random Discrete Morse},
proposed by Benedetti and Lutz in~\cite{Ben14}, corresponds exactly to a maximal decreasing scheme. 
See this paper for many details of the algorithm (computational complexity, implementation in GAP, comparison with other algorithms, lower bounds for discrete Morse vectors...). See also Section 2.3 and 
Algorithm 1 in~\cite{Sco19}.

Also, there is a link between a maximal increasing scheme and \emph{coreduction based algorithms}
\cite{Mro09,Har14,fugacci2019computing}. As mentioned in \cite{fugacci2019computing}, 
a coreduction is not feasible on a simplicial complex.
In fact, the coreduction algorithm presented in \cite{fugacci2019computing} may be formalized with a Morse sequence through the following definition.

\begin{definition} \label{def:cored}
Let $K$ be a simplicial complex. A \emph{coreduction  sequence (on $K$)} is a sequence
$\overrightarrow{C} = \langle K = C_0,..., C_k = \emptyset \rangle$ such that the sequence 
$\ms = \langle \emptyset = K_0 = K \setminus C_0,...,K_k = K \setminus C_k = K \rangle$ is a Morse sequence.
\end{definition}

In other words, a sequence $\overrightarrow{C} = \langle K = C_0,..., C_k = \emptyset \rangle$ is a coreduction sequence if, for each $i \in [1,k]$, $K \setminus C_i$
is either an elementary expansion or an elementary filling  of $K \setminus C_{i-1}$. It may be checked 
that the notion of a coreduction
presented in \cite{fugacci2019computing} fully agrees
with the above definition. It follows that the
corresponding maximal coreduction algorithm may be seen
as a maximal increasing scheme if we 
simply build a filtration with the simplexes that are removed by such an algorithm; see Section 5  of \cite{fugacci2019computing}.

Thus, Morse sequences allow us to retrieve two methods 
for building a gradient vector field,
which try to minimize the number of
critical simplexes. Equivalently, they try to find optimal Morse sequences. It is worth mentioning that this problem is, in general, NP-hard \cite{Jos06}.
Therefore, these methods do not, in general, give optimal results. 

Now, let $\ms = \langle \emptyset = K_0,..., K_k = K \rangle$
be a Morse sequence on $K$.
We write $\overleftarrow{W}$ for the inverse of the sequence $\ms$, that is, we have
$\overleftarrow{W} = \langle K = K_k,..., K_0 = \emptyset \rangle$.

Thus, if $\ms$ is a Morse sequence, $\overleftarrow{W}$ is a sequence $\langle L_0,..., L_k \rangle$
such that, for each $i \in [1,k]$,
$L_i$ is either an elementary collapse or an elementary perforation of $L_{i-1}$.
The following definition is a formal presentation of maximal increasing and decreasing schemes.
See also \cite[Definition 11]{Ben17}  for an alternative formalization of a maximal decreasing scheme based 
on basic Morse functions. 

\begin{definition} \label{def:max1}
Let $\ms = \langle \emptyset = K_0,..., K_k = K \rangle$
be a Morse sequence on $K$.
For any $i \in [0,k]$, we say that $K_i$ is \emph{maximal for $\ms$} (resp. \emph{maximal for $\mso$}) if
no elementary expansion (resp. collapse) of $K_i$ is a subset of $K$.

\noindent
1) We say that $\ms$ is \emph{maximal} if, for any $i \in [1,k]$, the complex $X_{i-1}$ is maximal  for $\ms$ whenever $X_i$ is critical
for $\ms$.

\noindent
2)
We say that $\mso$ is \emph{maximal} if, for any $i \in [0,k-1]$, the complex $X_{i+1}$ is maximal  for $\mso$ whenever $X_i$ is critical
for $\mso$.
\end{definition}

Perhaps surprisingly, there exist some significant differences between these two schemes, in particular in regard
to the number of critical simplexes that are obtained.

The complex of Figure \ref{fig:lun}, already considered in \cite{Ben14} and \cite{fugacci2019computing}, 
illustrates this difference.
The complex $K$ in this example is a graph, that is, we have $dim(K) \leq 1$. 
In (a) and (b), the results that may be produced by a maximal decreasing scheme and by a maximal increasing scheme are given;
the corresponding Morse vectors are $(2,3)$ and $(1,2)$. This last vector  corresponds to the Betti numbers of $K$.
It can be seen that the result in (a) cannot be obtained by a maximal increasing scheme. 
Actually, the following result is easy to check.

 \begin{proposition} \label{pro:max1} Let $K$ be a complex, and let $\ms$ be a Morse sequence on $K$ that is maximal.
If $K$ is a graph, then $\ms$ is perfect.
 \end{proposition}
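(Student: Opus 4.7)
The plan is to exploit dimension one heavily. First, I would classify elementary expansions: since $\dim(K)\le 1$, any free pair $(\sigma,\tau)$ appearing in an elementary expansion of $K_{i-1}$ inside $K$ must consist of a vertex $\sigma=\{w\}$ and an edge $\tau=\{v,w\}$ in $K$ with $\{v\}\in K_{i-1}$ and $\{w\}\notin K_{i-1}$ --- there are no $2$-simplexes, and the requirement $\sigma,\tau\notin K_{i-1}$ forces $w$ to be a new vertex while $\{v\}\in K_{i-1}$ because $K_{i-1}$ is a complex containing the other subface of $\tau$. Correspondingly, any elementary filling either adds a brand-new vertex (critical $0$-simplex) or an edge whose two endpoints are already in $K_{i-1}$ (critical $1$-simplex).

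Next I translate maximality. Writing $V_i$ for the vertex set of $K_i$, the above classification shows that $K_{i-1}$ admits no elementary expansion inside $K$ iff every edge $\{v,w\}\in K$ with $v\in V_{i-1}$ satisfies $w\in V_{i-1}$, that is, iff $V_{i-1}$ is a union of vertex sets of connected components of $K$. By the definition of a maximal Morse sequence this holds whenever the next simplex is critical, so each critical $0$-simplex $\{w\}$ that is added must lie in a connected component of $K$ whose vertices were all absent from the preceding complex. Combined with the observation that expansions and edge-fillings never introduce a vertex from a previously untouched component, this produces a bijection between critical $0$-simplexes and connected components of $K$, giving $c_0(\ms)=b_0(K)$.

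Finally, I would count critical $1$-simplexes by Euler-characteristic bookkeeping. Each elementary expansion contributes exactly one vertex and one edge; each critical $0$-simplex contributes only a vertex; each critical $1$-simplex contributes only an edge. Letting $e$ denote the number of expansions, $|V(K)|=c_0(\ms)+e$ and $|E(K)|=c_1(\ms)+e$; substituting $c_0(\ms)=b_0(K)$ gives $c_1(\ms)=|E(K)|-|V(K)|+b_0(K)=b_1(K)$, using the standard relation $b_0(K)-b_1(K)=|V(K)|-|E(K)|$ for graphs. Hence $\vec{c}(\ms)=\vec{b}(K)$ and $\ms$ is perfect. The only delicate point, which I would take care to write out precisely, is the maximality translation in the second paragraph; everything else is direct one-dimensional combinatorics.
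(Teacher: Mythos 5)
Your proof is correct and complete. The paper itself offers no argument for this proposition (it is dismissed as ``easy to check''), so there is nothing to compare against; your write-up substantiates the claim in the natural way. The two load-bearing points are both handled properly: (i) the translation of ``no elementary expansion of $K_{i-1}$ is a subset of $K$'' into ``the vertex set of $K_{i-1}$ is a union of vertex sets of connected components of $K$'', which is exactly right because in a graph every elementary expansion has the form $(\{w\},\{v,w\})$ with $\{v\}\in K_{i-1}$ and $\{w\}\notin K_{i-1}$, and (ii) the resulting bijection between critical $0$-simplexes and components, using that expansions and edge-fillings can only touch components already reached. Once $c_0(\ms)=b_0(K)$ is established, the Euler-characteristic count $|V(K)|=c_0+e$, $|E(K)|=c_1+e$ forces $c_1(\ms)=b_1(K)$, and no further argument is needed since $\dim(K)\le 1$. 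One cosmetic remark: it is worth noting explicitly that the case $i=1$ (where $K_0=\emptyset$) is covered vacuously by your maximality translation, so the first critical $0$-simplex also opens a fresh component.
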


Now, let us consider the complex $K$ depicted in Figure \ref{fig:hachi}. This complex, given  by Hachimori in \cite{hachimori2000combinatorics,HachLib}, is a slight 
modification of the dunce hat.
We observe that $K$ is collapsible, therefore $\vec{b}(K) = (1,0,0)$.
We also observe that $K$ contains a single free pair, which is $(\{1,3\}, \{1,3,4\})$. Thus, any collapse sequence must begin with this pair.
Now, we see that we can build a spanning tree on $K$ that contains the edge $\{1,3\}$.
It is possible that a complex which is built in the first steps of a maximal increasing scheme of $K$ contains this edge. 
This edge will prevent further expansions of the sequence from recovering the full complex $K$.
Such a sequence $\ms$ is depicted in Figure \ref{fig:hachi}: $\ms$ is not perfect.
It can be seen that this cannot happen with a maximal decreasing scheme.
In fact, we have the following result.

 \begin{proposition} \label{pro:max2}
Let $K$ be a complex, with $dim(K) = 2$, and let $\ms$ be a Morse sequence on $K$ such that $\mso$ is maximal.
If $K$ is collapsible, then $\ms$ is perfect.
 \end{proposition}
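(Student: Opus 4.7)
The plan is to argue by contradiction. Suppose $\ms$ is not perfect; since $K$ is collapsible we have $\vec{b}(K) = (1, 0, 0)$, so $\ms$ must have at least two critical faces, whence $\mso = \langle K = L_0, L_1, \ldots, L_k = \emptyset \rangle$ contains at least two perforation steps. Let $j$ be the index of the first perforation in $\mso$. Because the total number of perforations is at least two, $L_j$ cannot be a single vertex (otherwise the only perforation would be the final one, giving exactly one critical simplex). Every step before $j$ is an elementary collapse, so $L_j$ is obtained from $K$ by a sequence of collapses, and in particular $L_j$ is connected and contractible. By maximality of $\mso$, the complex $L_j$ admits no free pair.

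The argument now proceeds by a case analysis on $\dim(L_j)$, which is at most $2$ since collapses do not raise dimension. A connected $0$-dimensional complex is a single vertex, contradicting the choice of $L_j$. If $\dim(L_j) = 1$, then $L_j$ is a contractible graph, i.e.\ a tree with at least one edge; any such tree has a leaf vertex $v$ whose unique incident edge $e$ gives a free pair $(v, e)$, contradicting maximality. It remains to rule out the case $\dim(L_j) = 2$.

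For the two-dimensional case, the plan is to establish a confluence lemma specific to dimension two: if $K$ is a $2$-dimensional collapsible complex and $(\alpha, \beta)$ is any free pair of $K$, then $K \setminus \{\alpha, \beta\}$ is still collapsible. Granted this, iteration along the collapse sequence $K = L_0 \searrow L_1 \searrow \cdots \searrow L_j$ yields that $L_j$ itself is collapsible; since $L_j$ is not a single vertex, the first collapse of $L_j$ exhibits a free pair, again contradicting the maximality of $\mso$. The lemma would be proved by induction on $|K|$ using a diamond-style rearrangement of a witnessing collapse sequence $S = (p_1, \ldots, p_n)$ of $K$: if $p_1 \neq (\alpha, \beta)$ one swaps $(\alpha, \beta)$ to the front, observing that when $\{\alpha, \beta\}$ is disjoint from the simplices of $p_1$ both pairs remain free after the other is removed, while the only ``interacting'' case allowed in dimension two is when $p_1$ and $(\alpha, \beta)$ share their top face (a triangle with two free edges), handled by a careful local analysis showing that a fresh free pair appears in $K \setminus \{\alpha, \beta\}$.

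The main obstacle is precisely this confluence lemma, and in particular its ``shared facet'' subcase: the dimension hypothesis $\dim(K) = 2$ is used essentially here, because it restricts the shared top face to a triangle and sharply bounds the combinatorics of the interaction. In dimension three and above the analogous statement is false in general (classical examples such as Bing's house with two rooms produce collapsible $3$-complexes in which a ``wrong'' first collapse lands in a non-collapsible subcomplex), and it is precisely this obstruction that the hypothesis $\dim(K) = 2$ removes.
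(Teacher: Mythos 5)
The paper states Proposition~\ref{pro:max2} without any proof, so there is nothing to compare your argument against; I evaluate it on its own terms. Your reduction is correct and is surely the intended one: if $\ms$ is not perfect it has at least two critical cells, so $\mso$ contains at least two perforations; the complex sitting just before the first perforation (note this is $L_{j-1}$ in your indexing, not $L_j$) is reached from $K$ by collapses alone, is not a single vertex, and by maximality of $\mso$ admits no free pair. Everything therefore hinges on your ``confluence lemma'': an elementary collapse of a collapsible $2$-complex is again collapsible. That lemma is true, and isolating it is the right move (your separate treatment of $\dim\le 1$ is fine, though it is subsumed by the lemma, since any collapsible complex other than a point has a free pair).

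The gap is that the confluence lemma, which carries the entire content of the proposition, is only sketched, and the sketch does not go through in its one nontrivial case. You correctly identify that two free pairs $(\alpha,\beta)$ and $(\gamma,\beta')$ of $K$ can only interact by sharing the top face, $\beta=\beta'$; but in that case your induction on $|K|$ has nothing smaller to recurse on --- $K\setminus\{\alpha,\beta\}$ and $K\setminus\{\gamma,\beta\}$ have the same cardinality and neither contains the other --- and the claim that ``a fresh free pair appears in $K\setminus\{\alpha,\beta\}$'' is neither evidently true (the edge $\gamma$ merely becomes a facet there, which need not create a free pair) nor sufficient to restart the induction. One way to close the gap: first reorder a witnessing collapse sequence of $K$ in decreasing order of dimension (the standard swap argument), so that $K\searrow T\searrow \mathrm{pt}$, where the first phase uses only (edge, triangle) collapses and $T$ is a tree. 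If $(\alpha,\beta)$ is a free (edge, triangle) pair, then at the step where $\beta$ is removed via some edge $\gamma$ you may use $\alpha$ instead, since the freeness of $\alpha$ persists in every subcomplex still containing $\beta$; all later triangle-phase steps are unaffected, because exchanging one maximal edge for another changes no edge's set of triangular cofaces, and the residual $1$-complex $T\cup\{\gamma\}\setminus\{\alpha\}$ is contractible, hence a tree, hence collapsible. The (vertex, edge) case is similar and easier. This is exactly where $\dim(K)=2$ enters: after the top-dimensional phase one is left with a contractible graph, which is automatically collapsible, whereas in dimension $3$ the residual $2$-complex may be a Bing's house, as you rightly observe. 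In short: correct architecture and a true key lemma, but the lemma's proof as proposed would stall in the shared-facet case and needs the reordering argument (or an equivalent) to be completed.
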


\begin{figure}[tb]
    \centering
    \begin{subfigure}[t]{0.4\textwidth}
        \centering
        \includegraphics[width=.99\textwidth]{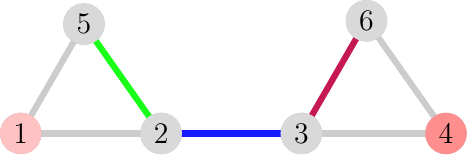}
        \caption{}
    \end{subfigure}%
    ~~~~~
    \begin{subfigure}[t]{0.4\textwidth}
        \centering
        \includegraphics[width=.99\textwidth]{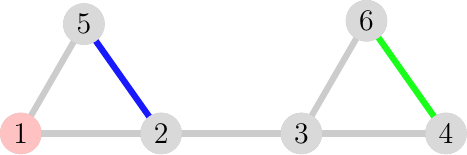}
        \caption{}
    \end{subfigure}%

 \caption{\label{fig:lun} A 1-dimensional complex. The results of a maximal decreasing scheme (a),  and
 a maximal increasing scheme (b). See text for details.
}

\end{figure}

\begin{figure}[tb]
    \centering
    ~
    \begin{subfigure}[t]{0.32\textwidth}
        \centering
        \includegraphics[width=.99\textwidth]{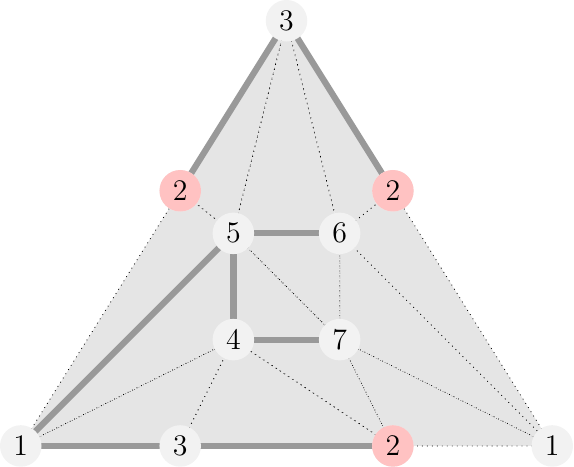}
        \caption{}
    \end{subfigure}%
    ~
    \begin{subfigure}[t]{0.32\textwidth}
        \centering
        \includegraphics[width=.99\textwidth]{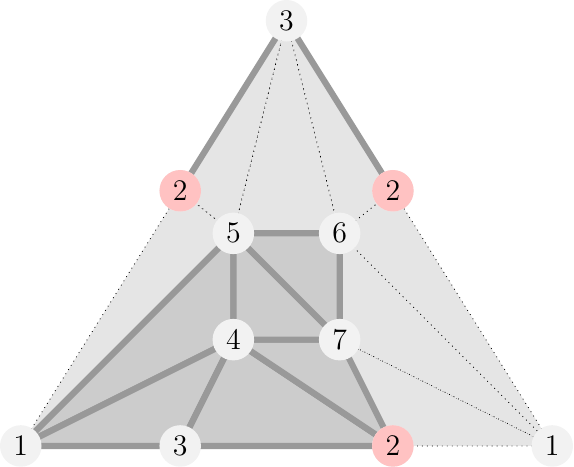}
        \caption{}
    \end{subfigure}%
    ~
    \begin{subfigure}[t]{0.32\textwidth}
        \centering
        \includegraphics[width=.99\textwidth]{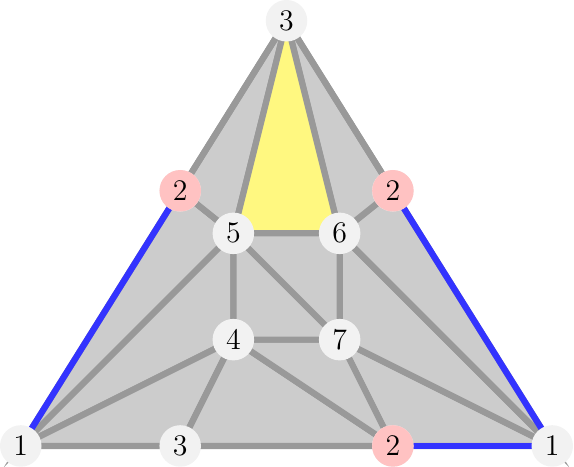}
        \caption{}
    \end{subfigure}%
\caption{\label{fig:hachi} A Morse sequence obtained by a maximal increasing scheme on Hachimori's example.
 (a): Starting from the critical 0-simplex 2, we create a maximal spanning tree that contains the edge $\{1,3\}$.
 (b): We then make all possible expansions. At this point, we have to select a critical 1-simplex, the edge $\{1,2\}$. 
 (c): We continue with expansions, until this is no longer possible. We then have to add the critical 2-simplex $\{3,5,6\}$. See text for a discussion.}
\end{figure}

\section{Conclusion}
\label{sec:conclusion}

In this paper, we introduce the notion of a Morse sequence for a simple presentation 
of some basic ingredients of discrete Morse theory: 
\begin{itemize}[noitemsep,topsep=0pt]
\item  The collapse theorem becomes a property that is contained in the very definition of a Morse sequence;
\item The link between Morse sequences and different kinds of Morse
functions is straightforward;
\item  A Morse sequence may represent the gradient vector field of an arbitrary
discrete Morse function; 
\item Maximal Morse sequences formalize two basic schemes
for building the gradient vector fields of an arbitrary simplicial complex. 
\end{itemize}

Morse sequences are not only interesting by themselves,
they also offer new perspectives for exploring the topology of simplicial complexes.
For example, adding information to Morse sequences leads to novel schemes for computing topological invariant such as cycles, cocycles, and Betti numbers. This can be achieved by defining {\em Morse frames}, which are maps that associate a set of critical simplexes to each simplex of the complex. 
See the companion paper~\cite{Bertrand2023MorseFrames}
where this approch is explored. 

\bibliographystyle{splncs04}
\bibliography{biblio}
\end{document}